\documentclass{article}

\usepackage{microtype}
\usepackage{graphicx}
\usepackage{subfigure}
\usepackage{booktabs}

\usepackage{hyperref}

\usepackage[accepted]{icml2024}

\usepackage{amsmath}
\usepackage{amssymb}
\usepackage{mathtools}
\usepackage{amsthm}

\usepackage{xcolor}

\usepackage[capitalize,noabbrev]{cleveref}

\theoremstyle{plain}
\newtheorem{theorem}{Theorem}[section]
\newtheorem{proposition}[theorem]{Proposition}

\theoremstyle{definition}

\theoremstyle{remark}

\usepackage[textsize=tiny]{todonotes}

\icmltitlerunning{\smash{Muon Operates Beyond the Edge of Stability}}

\begin{document}

\twocolumn[
\icmltitle{Muon Operates Beyond the Edge of Stability}

\icmlsetsymbol{equal}{*}

\begin{icmlauthorlist}
\icmlauthor{Niko Čović}{equal,eth}
\icmlauthor{Baraq Lipshitz}{equal,eth}
\icmlauthor{Alexander Padula}{equal,eth}
\icmlauthor{Stefan Višnjić}{equal,eth}
\end{icmlauthorlist}

\icmlaffiliation{eth}{Department of Informatics, ETH Zurich, Switzerland}

\icmlcorrespondingauthor{Alexander Padula}{apadula@ethz.ch}
\icmlkeywords{Sharpness, Edge of Stability, Machine Learning, Deep Learning}

\vskip 0.3in
]

\printAffiliationsAndNotice{\icmlEqualContribution}

\begin{abstract}

We study the Muon optimizer using sharpness and provide the first empirical evidence that the Flat Minima Hypothesis also applies to Muon. Further, we derive Muon's implicit preconditioner to analyze its potential Edge of Stability behavior. Experiments on CIFAR-10 show sharpness strongly correlates with generalization gap within each optimizer (Pearson $r=0.45-0.69, p\leq10^{-4}$), confirming the hypothesis holds for Muon similarly to Adam and SGD. However, Muon operates above the classical sharpness-based Adaptive Edge of Stability threshold that bounds diagonally-preconditioned optimizers like Adam and RMSprop, yet remains stable. Our findings suggest Muon operates in a fundamentally different regime than diagonally-adaptive methods. Defining the exact stability boundary for Muon remains an open question. All code is available at \url{https://github.com/NikoCovic/DL-project}.
\end{abstract}

\section{Introduction}
\label{introduction}

Muon is a matrix-structured optimizer recently introduced by \citet{jordan2024muon}. Recent work has empirically observed that it converges in fewer steps than first-order optimizers such as Adam \cite{kingma2014adam} and Stochastic Gradient Descent (SGD) while incurring less computational overhead than related second-order optimizers like Shampoo \cite{gupta2018shampoo}. While Muon has generated excitement for converging to high accuracy solutions in less wall-clock time than competing methods, the mechanisms underlying this behavior remain a subject of ongoing research. 

In this paper, we aim to characterize Muon's behavior in terms of the sharpness of its solutions. We are the first to empirically observe the relationship between sharpness and generalization predicted by the Flat Minima Hypothesis for Muon's solutions. We are also the first to attempt to verify whether Muon is bounded by an Edge of Stability in the same way as first-order optimizers such as Gradient Descent (GD) and Adam. We present a derivation for Muon's preconditioner and empirically show that Muon operates outside of the Edge of Stability as found in diagonally preconditioned optimizers such as Adam.

\subsection{Flat Minima Hypothesis}

 The Flat Minima Hypothesis \cite{hochreiter1997flat} \cite{keskar2017large} predicts that sharpness should correlate with the model's generalization gap — the difference between training and validation accuracy. It relies on the intuition that the validation loss landscape can be approximated as a stochastically perturbed version of the training landscape. Subsequent work has highlighted the limitations of this hypothesis. Most notably, \citet{dinh2017sharp} demonstrated that raw sharpness is not scale-invariant; one can alter it via re-parametrization without changing the model's function. This occurs frequently when comparing models across different optimizers. In response, methods such as Adaptive Sharpness-Aware Minimization (ASAM) \cite{kwon2021asam} have been proposed to define a scale-invariant geometric measure of a minimum's size. By validating the Flat Minima Hypothesis for Muon's solutions, we investigate whether there exist sharpness-hacking reparametrizations beyond scale invariance that a novel optimizer like Muon might inadvertently favor.

\subsection{Edge of Stability}

Although much work exists formalizing Muon's convergence \cite{shen2025muonconvergence, chen2025muonspectralnorm, sfyraki2025lionsmuons, sato280140878muonconvergencebatch, chang2025muonconvergencebeyond}, none attempts to look for an explanation using the Edge of Stability, a common phenomenon present in state-of-the-art optimizers such as Adam. This phenomenon was originally formalized in \cite{cohen2021gdeos} for full-batch GD. The authors identify that, throughout training, GD's sharpness seems to hover around a stability threshold value. They reason that divergence does not occur due to higher order Taylor terms which implicitly regularize the sharpness.

In \cite{cohen2024adaptivegradientmethodsedge} the authors extend the classical EoS to adaptive gradient methods (i.e. algorithms with implicit adaptive preconditioners) and coin the term Adaptive EoS (AEoS). Their work finds that the phenomenon still occurs in the sharpness of the effective Hessian $\lambda_\text{max}(P_tH_t)$, where $P_t$ is the implicit preconditioner at time $t$. Another important contribution of the paper is that they reason that the adaptive preconditioner allows the optimizer to reach sharper regions of the loss landscape. However, this work focuses only on diagonally preconditioned algorithms such as Adam and RMSprop \cite{tieleman2017divide}. 

In our experiments, breaking this assumption using Muon’s non-diagonal preconditioner eliminates the classical AEoS phenomenon. While the mechanism that keeps Muon in a stable regime remains unclear, we hypothesize that AEoS disappears because Muon’s updates are poorly aligned with the eigenbasis of the effective Hessian $P_tH_t$.

\section{Models and Methods}
\subsection{Optimizers}

We formalize all of the optimizers we use in our experiments in Appendix \ref{app:optims}. 

To determine the quality of sharpness as a generalization metric, we use \textsc{DecoupledMuon}, \textsc{NormalizedMuon}, \textsc{CoupledSGD} and \textsc{CoupledAdam}.

To quantify the AEoS, we compare \textsc{VanillaMuon} to \textsc{VanillaAdam} and \textsc{RMSprop}.

\subsection{Sharpness}
For parameters $w_t$, let $H_t := \nabla_w^2 \mathcal{L}(w_t)$ denote the Hessian of the training loss.

\textbf{Raw sharpness:} We define raw (Hessian) sharpness as
\begin{equation}
\mathcal{S}_{\text{raw}}(w_t) := \lambda_{\max}(H_t).
\end{equation}
Raw sharpness suffers from sensitivity to loss-preserving parameter rescaling, which is particularly an issue when comparing sharpness across solutions derived from different optimizers \cite{kwon2021asam}. Two parameterizations with identical functions can yield arbitrarily different sharpness values.

\textbf{Adaptive sharpness:} To avoid conflating optimizer behavior with irrelevant scaling effects, we additionally report adaptive sharpness \cite{kwon2021asam}, defined as

\begin{equation}
\mathcal S_{\text{adapt}}(w)
:= \max_{\|T_w^{-1}\epsilon\|\le\rho}
\big[\mathcal{L}(w+\epsilon)-\mathcal{L}(w)\big],
\end{equation}

where $T_w$ is a normalization operator depending on the current parameters. This defines a perturbation set in scale-normalized space, making the measure invariant to loss-preserving parameter rescaling. Unlike classical sharpness it should be comparable across different optimizers.

\textbf{Effective sharpness}: To measure the AEoS, raw sharpness is not sufficient except in the GD setting. In adaptive gradient optimizers the dynamics are governed by the effective hessian $P_tH_t$, where $P_t$ is the implicit preconditioner of the optimizer. We define the effective sharpness as:

\begin{equation}
\mathcal{S}_{\text{eff}}(w_t) = \lambda_{\text{max}}(P_tH_t)
\end{equation}

\subsection{Generalization Benchmark Methodology}
We base our first experimental setup on a variant of Airbench \cite{jordan2024airbench}, an optimized script that trains a VGG-like CNN to 94\% accuracy on the CIFAR-10 \cite{cifar10} image classification dataset.

We modify the Airbench script by (i) adding a callback function called after each epoch to measure sharpness, (ii) supporting training with a fixed learning rate in addition to the original Linear Decay Scheduler (LDS), and (iii) implementing \textsc{DecoupledMuon}, \textsc{CoupledAdam}, and \textsc{CoupledSGD} in addition to the \textsc{NormalizedMuon} implementation from the original script. 

For each optimizer, we perform an extensive hyperparameter sweep using Bayesian optimization with a Gaussian-process (GP) surrogate model. We consider both a fixed learning rate and an LDS schedule. Each sweep consists of 128-512 trials where the GP posterior selects hyperparameters to maximize validation accuracy. All reported results are using the best hyperparameters found by this procedure. 

Starting with highly optimized architectures and hyperparameters lends weight to the empirical observations we make regarding Muon's performance compared to Adam and SGD, and allows us to measure sharpness in a practical setting.

We train each run for 16 epochs to ensure convergence. After the final epoch, we report per-optimizer mean validation accuracy, generalization gap, and both raw and adaptive sharpness. For each optimizer, we compute correlation between the sharpness and the generalization gap.

\subsection{Edge of Stability}

To investigate whether Muon operates at the AEoS, we conduct experiments analogous to those in \cite{cohen2024adaptivegradientmethodsedge}. \textsc{VanillaMuon}($\eta, \mu$) (and other gradient-based algorithms) constructs an implicit adaptive preconditioner $P_t$ as we formalize in Appendix \ref{app:muon_preconditioner}, differing from \textsc{VanillaAdam}($\eta, \beta_1, \beta_2$) in that the preconditioner of \textsc{VanillaMuon}($\eta, \mu$) is non-diagonal. Using this we compute the sharpness of the effective Hessian $P_tH_t$. Note that this matrix has the same eigenvalues as the symmetric matrix $P_t^{1/2}H_tP_t^{1/2}$, so we opt to compute the sharpness of this matrix instead.

We measure $\mathcal{S}_\text{raw}$ and $\mathcal{S}_\text{eff}$ in each epoch of multiple full batch training runs for \textsc{VanillaMuon}($\eta, \mu$), \textsc{VanillaAdam}($\eta, \beta_1, \beta_2$) and \textsc{RMSprop}($\eta, \alpha$), each with fixed hyperparameters and multiple fixed learning rates $\eta$. Note that we restrict our analysis to algorithms without weight decay. This is because weight decay only introduces a slight difference in the constructed preconditioner when coupled, and only a slight difference in the loss when decoupled. As for \textsc{NormalizedMuon}($\eta, \mu$), the update operates in a different regime, where the \textsc{VanillaMuon}($\eta, \mu$) updates are projected onto a unit Frobenius ball. We believe the analysis of this algorithm would yield similar results, so we keep it out for simplicity.

We use a fully connected network with the tanh activation, a width of 200, and 5 hidden layers. Each run was executed for 1000 epochs on a subset of the CIFAR-10 dataset, which contained 200 samples per class and was preprocessed with mean normalization, using a train-validation split of 0.8-0.2. Finally, each model was trained on the MSE loss, as it demonstrates the AEoS behavior more clearly \cite{cohen2021gdeos}.

To construct the preconditioner of \textsc{VanillaMuon}, we use our derivation from Appendix \ref{app:muon_preconditioner}. Every epoch we store the new and previous weights $W_{t+1}^{(l)}$ and $W_t^{(l)}$ for each layer. This allows us to compute the update matrix as $U\Sigma'V^T = (W_t^{(l)} - W_{t+1}^{(l)})/\eta$ and construct the respective block of the preconditioner as $D_t^{(l)} = U \Sigma^{-1} \Sigma' U^T$, where $M_t^{(l)} = U \Sigma V^T$ is the SVD of the momentum. The preconditioner for the other optimizers is simpler, as it only requires a diagonal equal to the update at that step.

\section{Results}

\subsection{Generalization Benchmark Results}

First, \autoref{fig:fixed_sharp_gap} shows that under a fixed learning rate, sharpness strongly correlates with generalization gap within each optimizer. For raw sharpness, Pearson correlations range from 0.27 (Adam) to 0.49 (SGD), all statistically significant. Using adaptive sharpness further strengthens this relationship, with correlations increasing to 0.45–0.69 across optimizers (all $p<10^{-4}$). However, these trends do not transfer across optimizers: raw sharpness varies substantially between methods without corresponding changes in gap, indicating weak cross-optimizer predictiveness.

Beyond correlations, we observe a new pattern in sharpness magnitude. Under a fixed learning rate, Muon exhibits substantially lower adaptive sharpness (ASAM = 0.82 ± 0.229) than SGD (2.06 ± 0.911) and Adam (1.93 ± 0.626). Interestingly, this trend reverses under LDS, where Muon becomes sharper (0.71 ± 0.097) than SGD (0.32 ± 0.072) and Adam (0.56 ± 0.074).

Turning to performance, our results align with prior work: Muon variants consistently achieve higher validation accuracy. Under LDS, Normalized Muon reaches 0.94 ± 0.002, outperforming SGD and Adam (both 0.92 ± 0.002). In contrast, fixed learning rates degrade performance for all optimizers (e.g., Normalized Muon: 0.94 → 0.90; SGD: 0.92 → 0.80).

Finally, despite these differences, the mean generalization gap remains stable across schedulers, staying within 0.04–0.08 for all optimizers.

For completeness, we report the means and standard deviations of the aforementioned accuracy and sharpness measures in \autoref{tbl:benchmark} in the Appendix. We additionally note that, under a scheduled learning rate, we do not observe the correlation predicted by the Flat Minima Hypothesis; see \autoref{fig:scheduled_sharp_gap} in the Appendix.

\begin{figure*}[ht]
\vskip 0.2in
\begin{center}
\centering
\includegraphics[width=0.98\textwidth]{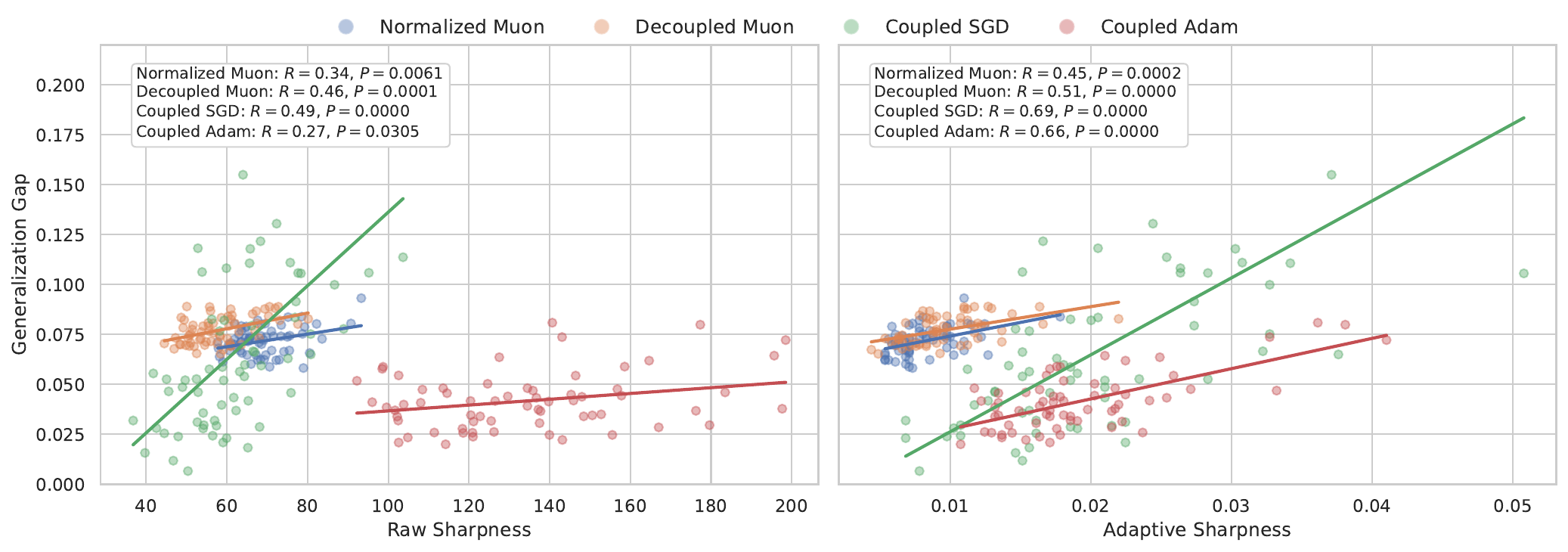}
\caption{Per-optimizer correlation of raw/adaptive sharpness with generalization gap using a fixed LR. Each point is the final checkpoint for 1 of 64 runs per optimizer. }
\label{fig:fixed_sharp_gap}
\end{center}
\vskip -0.2in
\end{figure*}

\subsection{Edge of Stability}

In Figure \ref{fig:muon_eos} it is clearly visible that \textsc{VanillaMuon}($\eta, \mu$) seems to operate beyond the classical stability threshold. However, it does not seem to diverge as apparent from the loss trajectory. Interestingly, the raw sharpness appears to exhibit similar behavior to \textsc{Adam}($\eta, \beta_1, \beta_2$) and \textsc{RMSprop}($\eta, \alpha$) up to convergence (see Figure \ref{fig:adam_eos}), however past convergence the sharpness of \textsc{VanillaMuon}($\eta, \mu$) starts dropping rapidly, a phenomenon not present in the other algorithms.

\begin{figure*}[ht]
\vskip 0.2in
\begin{center}
\begin{subfigure}
    \centering
    \includegraphics[width=0.66\columnwidth]{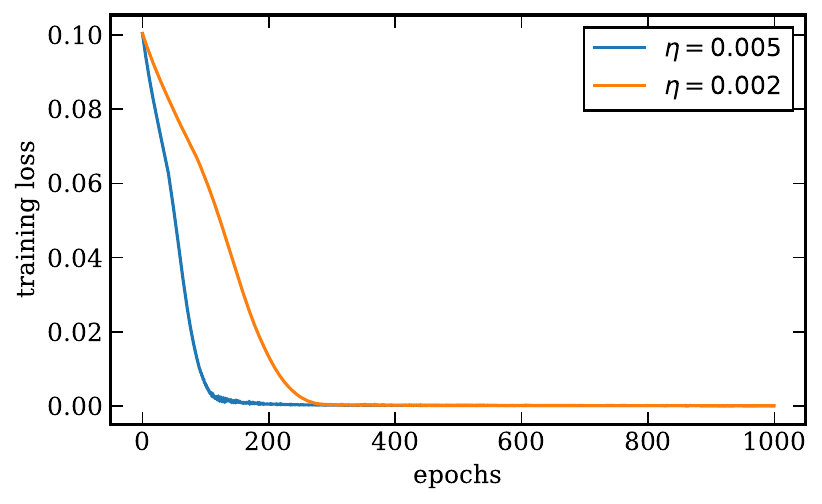}
\end{subfigure}
\begin{subfigure}
    \centering
    \includegraphics[width=0.66\columnwidth]{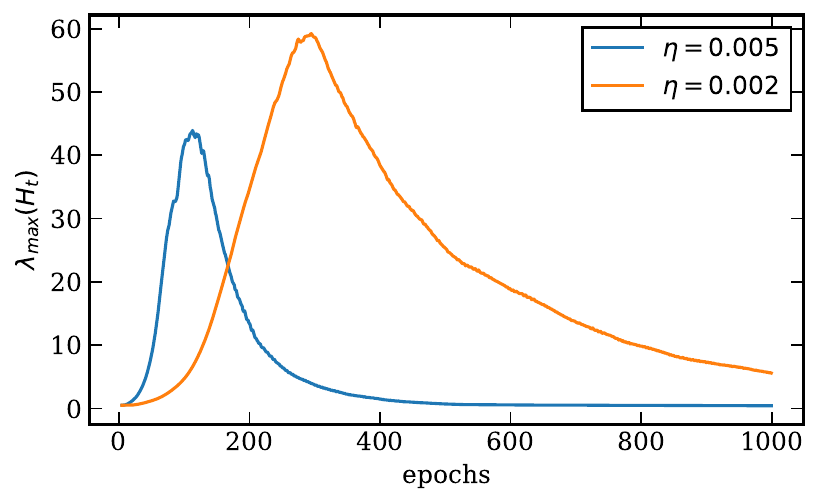}
\end{subfigure}
\begin{subfigure}
    \centering
    \includegraphics[width=0.66\columnwidth]{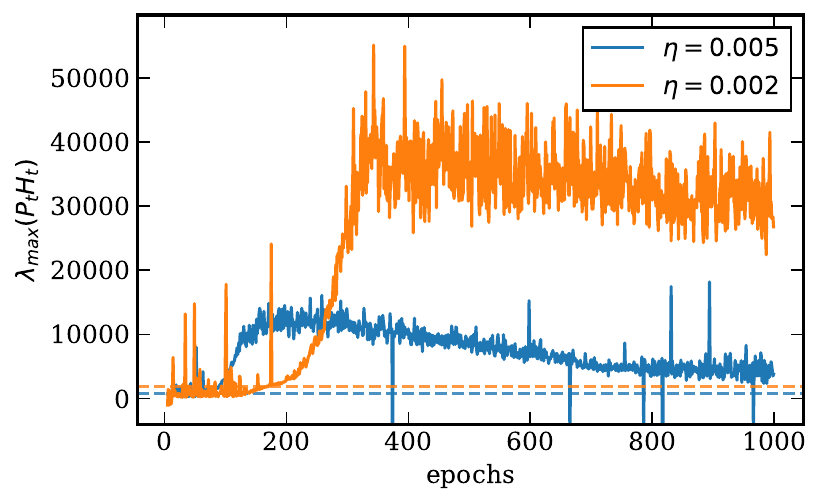}
\end{subfigure}
\caption{The training loss (left), raw sharpness (middle) and Effective Sharpness (right) of \textsc{VanillaMuon}($\eta, \mu$) with $\mu = 0.9$ and two different $\eta$. The dashed lines in the right plot represent the threshold where it is expected for Muon to oscillate, computed as $(2 + 2\mu)/\eta$ \cite{goh2017momentum, cohen2024adaptivegradientmethodsedge}. Clearly, \textsc{VanillaMuon}($\eta, \mu$) operates beyond this threshold.}
\label{fig:muon_eos}
\end{center}
\vskip -0.2in
\end{figure*}

\section{Discussion}

\subsection{Flat Minima Hypothesis}

In aggregate, we find that the Flat Minima Hypothesis holds for solutions obtained with Muon to a similar extent as for those obtained with Adam and SGD. The fact that, in \autoref{fig:fixed_sharp_gap}, runs across different optimizers remain broadly correlated with generalization suggests that scaling bias largely explains the distinct clustering observed in the Raw Sharpness plot. Nevertheless, the Adaptive Sharpness plot continues to exhibit optimizer-specific clusters. If Adaptive Sharpness fully accounted for all parametrization bias, no such clustering should be expected, as the sharpness measure would depend solely on the generalization gap and noise. Muon’s cluster, for instance, exhibits lower Adaptive Sharpness, yet this does not correspond to a comparably reduced generalization gap. Assuming that the Flat Minima Hypothesis holds for some (as yet unidentified) flatness measure, our results indicate that, while Adaptive Sharpness is effective at mitigating scale-related bias, it is not a completely parametrization-invariant measure of flatness.

\subsection{Edge of Stability}

As apparent from Figure \ref{fig:muon_eos}, \textsc{VanillaMuon} seems to operate beyond the classical AEoS, but still does not diverge. Empirically, \textsc{VanillaMuon} may operate within a different regime compared to \textsc{Adam} and \textsc{RMSprop}. We hypothesize that this is caused by those algorithms constructing a diagonal preconditioner, while \textsc{VanillaMuon} constructs a block-diagonal one. Since stability dynamics are controlled by $P_tH_t$, we speculate that the large sharpness of Muon stems from the misalignment of the update directions with the eigenbasis of $P_tH_t$. This is not a problem for diagonal optimizers, as the alignment is expected to be high. Sharpness could therefore be overestimating the "stability" of \textsc{VanillaMuon} and should be replaced with a different metric. However, we do not explore this in our work so it remains an open question.

A surprising observation also comes from the raw sharpness found in \textsc{VanillaMuon}'s trajectory, seen in the middle plot of Figure \ref{fig:muon_eos}. Like the raw sharpness of \textsc{VanillaAdam} and \text{RMSprop}, it rises until convergence, as expected given the assumptions of \cite{cohen2024adaptivegradientmethodsedge}. However, post convergence \textsc{VanillaMuon} seems to drastically reduce its raw sharpness. This does not happen with \textsc{VanillaAdam} and \textsc{RMSprop}, where the raw sharpness seems to plateau post convergence. This is another strong hint that \textsc{VanillaMuon} operates within a different regime. 

\section{Summary}

We were able to reproduce the Flat Minima Hypothesis within optimizers under a realistic setting and show that it also applies to Muon. We show, however, that even using Adaptive Sharpness the hypothesis does not explain all the variance in the generalization gap.

We show that the AEoS does not appear with Muon in terms of sharpness as it does with diagonally preconditioned optimizers such as Adam and RMSprop. We hypothesize that this stems from the misalignment of the preconditioners eigenvalues with the update directions, however we do not explore this in our work and leave this as an open question.

\bibliography{main}

\begin{thebibliography}{20}
\providecommand{\natexlab}[1]{#1}
\providecommand{\url}[1]{\texttt{#1}}
\expandafter\ifx\csname urlstyle\endcsname\relax
  \providecommand{\doi}[1]{doi: #1}\else
  \providecommand{\doi}{doi: \begingroup \urlstyle{rm}\Url}\fi

\bibitem[Chang et~al.(2025)Chang, Liu, and
  Yuan]{chang2025muonconvergencebeyond}
Chang, D., Liu, Y., and Yuan, G.
\newblock On the convergence of {Muon} and beyond.
\newblock \emph{arXiv preprint arXiv:2509.15816}, 2025.

\bibitem[Chen et~al.(2025)Chen, Li, and Liu]{chen2025muonspectralnorm}
Chen, L., Li, J., and Liu, Q.
\newblock {Muon} optimizes under spectral norm constraints.
\newblock \emph{arXiv preprint arXiv:2506.15054}, 2025.

\bibitem[Cohen et~al.(2021)Cohen, Kaur, Li, Kolter, and
  Talwalkar]{cohen2021gdeos}
Cohen, J.~M., Kaur, S., Li, Y., Kolter, J.~Z., and Talwalkar, A.
\newblock Gradient descent on neural networks typically occurs at the edge of
  stability.
\newblock \emph{arXiv preprint arXiv:2103.00065}, 2021.

\bibitem[Cohen et~al.(2024)Cohen, Ghorbani, Krishnan, Agarwal, Medapati,
  Badura, Suo, Cardoze, Nado, Dahl, and
  Gilmer]{cohen2024adaptivegradientmethodsedge}
Cohen, J.~M., Ghorbani, B., Krishnan, S., Agarwal, N., Medapati, S., Badura,
  M., Suo, D., Cardoze, D., Nado, Z., Dahl, G.~E., and Gilmer, J.
\newblock Adaptive gradient methods at the edge of stability, 2024.
\newblock URL \url{https://arxiv.org/abs/2207.14484}.

\bibitem[Dinh et~al.(2017)Dinh, Pascanu, Bengio, and Bengio]{dinh2017sharp}
Dinh, L., Pascanu, R., Bengio, S., and Bengio, Y.
\newblock Sharp minima can generalize for deep nets.
\newblock In \emph{International Conference on Machine Learning (ICML)}, pp.\
  1019--1028. PMLR, 2017.

\bibitem[Goh(2017)]{goh2017momentum}
Goh, G.
\newblock Why momentum really works.
\newblock \emph{Distill}, 2\penalty0 (4):\penalty0 e6, 2017.

\bibitem[Gupta et~al.(2018)Gupta, Koren, and Singer]{gupta2018shampoo}
Gupta, V., Koren, T., and Singer, Y.
\newblock Shampoo: Preconditioned stochastic tensor optimization, 2018.
\newblock URL \url{https://arxiv.org/abs/1802.09568}.

\bibitem[Hochreiter \& Schmidhuber(1997)Hochreiter and
  Schmidhuber]{hochreiter1997flat}
Hochreiter, S. and Schmidhuber, J.
\newblock Flat minima.
\newblock \emph{Neural Computation}, 9\penalty0 (1):\penalty0 1--42, 1997.

\bibitem[Jordan(2024)]{jordan2024airbench}
Jordan, K.
\newblock 94\% on {CIFAR-10} in 3.29 seconds on a single {GPU}, 2024.
\newblock URL \url{https://arxiv.org/abs/2404.00498}.

\bibitem[Jordan et~al.(2024)Jordan, Jin, Boza, You, Cesista, Newhouse, and
  Bernstein]{jordan2024muon}
Jordan, K., Jin, Y., Boza, V., You, J., Cesista, F., Newhouse, L., and
  Bernstein, J.
\newblock {Muon}: An optimizer for hidden layers in neural networks, 2024.
\newblock URL \url{https://kellerjordan.github.io/posts/muon/}.

\bibitem[Keskar et~al.(2017)Keskar, Mudigere, Nocedal, Smelyanskiy, and
  Tang]{keskar2017large}
Keskar, N.~S., Mudigere, D., Nocedal, J., Smelyanskiy, M., and Tang, P. T.~P.
\newblock On large-batch training for deep learning: Generalization gap and
  sharp minima.
\newblock In \emph{International Conference on Learning Representations
  (ICLR)}, 2017.

\bibitem[Kingma(2014)]{kingma2014adam}
Kingma, D.~P.
\newblock Adam: A method for stochastic optimization.
\newblock \emph{arXiv preprint arXiv:1412.6980}, 2014.

\bibitem[Krizhevsky et~al.()Krizhevsky, Nair, and Hinton]{cifar10}
Krizhevsky, A., Nair, V., and Hinton, G.
\newblock {CIFAR-10} (canadian institute for advanced research).
\newblock URL \url{http://www.cs.toronto.edu/~kriz/cifar.html}.

\bibitem[Kwon et~al.(2021)Kwon, Kim, Park, and Choi]{kwon2021asam}
Kwon, J., Kim, J., Park, H., and Choi, I.~K.
\newblock {ASAM}: Adaptive sharpness-aware minimization for scale-invariant
  learning of deep neural networks.
\newblock In \emph{International conference on machine learning}, pp.\
  5905--5914. PMLR, 2021.

\bibitem[Liu et~al.(2025)Liu, Su, Yao, Jiang, Lai, Du, Qin, Xu, Lu, Yan,
  et~al.]{liu2025muonscalableLLM}
Liu, J., Su, J., Yao, X., Jiang, Z., Lai, G., Du, Y., Qin, Y., Xu, W., Lu, E.,
  Yan, J., et~al.
\newblock {Muon} is scalable for {LLM} training.
\newblock \emph{arXiv preprint arXiv:2502.16982}, 2025.

\bibitem[Sato et~al.(2025)Sato, Naganuma, and
  Iiduka]{sato280140878muonconvergencebatch}
Sato, N., Naganuma, H., and Iiduka, H.
\newblock Convergence bound and critical batch size of {Muon} optimizer.
\newblock \emph{arXiv preprint arXiv:2507.01598}, 2025.

\bibitem[Sfyraki \& Wang(2025)Sfyraki and Wang]{sfyraki2025lionsmuons}
Sfyraki, M.-E. and Wang, J.-K.
\newblock Lions and muons: Optimization via stochastic frank-wolfe.
\newblock \emph{arXiv preprint arXiv:2506.04192}, 2025.

\bibitem[Shen et~al.(2025)Shen, Huang, Huang, Shen, and
  Zhang]{shen2025muonconvergence}
Shen, W., Huang, R., Huang, M., Shen, C., and Zhang, J.
\newblock On the convergence analysis of {Muon}.
\newblock \emph{arXiv preprint arXiv:2505.23737}, 2025.

\bibitem[Tieleman \& Hinton(2017)Tieleman and Hinton]{tieleman2017divide}
Tieleman, T. and Hinton, G.
\newblock Divide the gradient by a running average of its recent magnitude.
  coursera: Neural networks for machine learning.
\newblock In \emph{Technical report}. University of Toronto, 2017.

\bibitem[Yao et~al.(2020)Yao, Gholami, Keutzer, and Mahoney]{yao2020pyhessian}
Yao, Z., Gholami, A., Keutzer, K., and Mahoney, M.~W.
\newblock {PyHessian}: Neural networks through the lens of the hessian.
\newblock In \emph{2020 IEEE international conference on big data (Big data)},
  pp.\  581--590. IEEE, 2020.

\end{thebibliography}
\bibliographystyle{icml2024}

\newpage
\appendix
\onecolumn
\section{Muon Preconditioner Derivation} \label{app:muon_preconditioner}

In this section, we formally derive the implicit preconditioner of \textsc{VanillaMuon}($\eta, \mu$). We aim to rewrite the update in terms of the collected vectorized network weights $w_t = \begin{bmatrix}\text{vec}(W_t^{(1)})^T & \dots & \text{vec}(W_t^{(L)})^T\end{bmatrix}^T$ and a preconditioner $P_{t}$ as $w_{t+1} = w_t - \eta P_{t} m_{t}$, where $m_{t} = \begin{bmatrix}\text{vec}(M_{t}^{(1)})^T & \dots & \text{vec}(M_{t}^{(L)})^T\end{bmatrix}^T$ is the vectorized concatenated momentum. We begin by deriving a per-layer preconditioner and extend it to the entire network. Let $\mathcal{W}_t = \{W_t^{(l)} | 1 \le l \le L\}$ represent the set of all rectangular weight matrices optimized by Muon at time $t$. The authors in \cite{liu2025muonscalableLLM} have already shown that Muon uses Newton-Schulz to approximate $UV^T$ where $M_{t}^{(l)} = U \Sigma V^T$ is the SVD of the momentum. The implicit preconditioner then becomes $(M_{t}^{(l)}M_{t}^{(l)T})^{-1/2}$. Here, we derive the preconditioner assuming the approximate orthogonalization using Newton-Schulz.

\begin{proposition}
The Newton-Schulz update of \textsc{VanillaMuon}($\eta, \mu$) $O_{t}^{(l)}$ can be written as $O_{t}^{(l)} = U \Sigma' V^T$, where $M_{t}^{(l)} = U \Sigma V^T$ is the SVD of the corresponding momentum matrix and $\Sigma'$ is some diagonal matrix.
\end{proposition}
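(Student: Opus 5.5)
The plan is to write the Newton--Schulz iteration explicitly as an odd matrix polynomial and push the SVD of the momentum through it. In \textsc{VanillaMuon}, the momentum $M_t^{(l)}$ is first normalized, $X_0 = M_t^{(l)}/\|M_t^{(l)}\|_F$. It is then iterated for a fixed number $K$ of steps via
\[
X_{k+1} = a X_k + b (X_k X_k^T) X_k + c (X_k X_k^T)^2 X_k ,
\]
with fixed coefficients $(a,b,c)$, and we set $O_t^{(l)} = X_K$. The claim is that every iterate stays of the form $U \Sigma_k V^T$ with $\Sigma_k$ diagonal. The proof is by induction on $k$, with the same $U, V$ as in the SVD $M_t^{(l)} = U\Sigma V^T$.

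\textbf{Base case.} Since $\|M_t^{(l)}\|_F = \|\Sigma\|_F$ is a scalar, we have $X_0 = U (\Sigma/\|\Sigma\|_F) V^T$. So $\Sigma_0 = \Sigma/\|\Sigma\|_F$, which is diagonal.

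\textbf{Inductive step.} Suppose $X_k = U\Sigma_k V^T$ with $\Sigma_k$ diagonal. I will use the thin SVD, so that $U^TU = I$ and $V^TV = I$. Then
\[
X_kX_k^T = U\Sigma_k V^TV\Sigma_k^T U^T = U \Sigma_k\Sigma_k^T U^T,
\]
so $(X_kX_k^T)^j X_k = U (\Sigma_k\Sigma_k^T)^j\Sigma_k V^T$. Substituting into the iteration gives $X_{k+1} = U \Sigma_{k+1} V^T$ with
\[
\Sigma_{k+1} = a\Sigma_k + b\,\Sigma_k\Sigma_k^T\Sigma_k + c\,(\Sigma_k\Sigma_k^T)^2\Sigma_k .
\]
Each term is a product of diagonal matrices, so $\Sigma_{k+1}$ is diagonal. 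In fact its entries are $\sigma'_i = p(\sigma_i)$, where $p$ is the scalar odd polynomial $p(x) = ax + bx^3 + cx^5$ applied entrywise. After $K$ steps this yields $O_t^{(l)} = U\Sigma' V^T$ with $\Sigma' = \Sigma_K$ diagonal, as claimed.

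\textbf{Main obstacle: bookkeeping of shapes and rank.} For a rectangular $m\times n$ momentum, I will work with the thin SVD ($U\in\mathbb{R}^{m\times r}$, $V\in\mathbb{R}^{n\times r}$, $\Sigma$ an $r\times r$ diagonal matrix) so that the cancellation $V^TV=I$ is legitimate. It also remains to argue that the zero singular directions cause no trouble. Because $p$ is odd, $p(0)=0$, so these directions stay zero, and the form $U\Sigma'V^T$ is preserved. Some implementations transpose $X_0$ when $m>n$ and use $X^TX$ instead; this case is handled symmetrically.

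Finally, I will note a consequence needed later in the paper. When $M_t^{(l)}$ has full rank $r=m$ (so that $U$ is square), we have $O_t^{(l)} = U\Sigma'\Sigma^{-1}U^T M_t^{(l)}$. This justifies the per-layer preconditioner block $D_t^{(l)} = U\Sigma^{-1}\Sigma' U^T$ used in the experiments.
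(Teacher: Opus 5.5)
Your proof is correct and uses the same argument as the paper. You substitute the SVD into one quintic Newton--Schulz step to obtain $U(a\Sigma+b\Sigma^3+c\Sigma^5)V^T$, then iterate, since each step changes only the diagonal factor and leaves $U$ and $V$ unchanged. Your extra bookkeeping (the Frobenius normalization of $X_0$, a general step count $K$, the transposed implementation) only brings the argument closer to the real implementation. The thin SVD is not actually needed for $V^TV=I$, though: in the full SVD $V$ is square orthogonal, and the same computation goes through with $\Sigma_k\Sigma_k^T$ an $m\times m$ diagonal matrix.
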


\begin{proof}
Let $t > 0$ and $1 \le l \le L$ be an arbitrary time and weight matrix index. Let $M_{t}^{(l)} = U \Sigma V^T$ be the singular value decomposition of the momentum matrix.

The update step of \textsc{VanillaMuon($\eta, \mu$)} is $W_{t+1}^{(l)} = W_t^{(l)} - \eta O_{t}^{(l)}$, where $\eta$ is the learning rate and $O_{t}^{(l)} = \text{Newton-Schulz5}(M_t^{(l)}; a,b,c)$ for some coefficients $a, b, c$. 

Let $X_{k+1} = aX_k + b(X_kX_k^T)X_k + c (X_kX_k^T)^2X_k$ be a single Newton-Schulz5 iteration for some matrix $X_k$. By setting $X_0 = M_{t}^{(l)}$ and inserting the SVD of $M_{t}^{(l)}$ into the iteration, we can easily obtain $X_1 = U(a\Sigma + b\Sigma^3 + c\Sigma^5)V^T$. Notice that $a\Sigma + b\Sigma^3 + c\Sigma^5$ is also a diagonal matrix, and is in fact the diagonal singular value matrix of $X_1$. So, by applying this step multiple times, only the diagonal matrix of the SVD of $X_0$ changes, while $U$ and $V$ stay the same. Therefore, after applying 5 steps, we obtain a matrix $U D V^T$, where $D$ is some diagonal matrix. So, $O_{t}^{(l)} = \text{Newton-Schulz5}(M_{t}^{(l)}; a,b,c) = U \Sigma' V^T$
\end{proof}

\begin{proposition}
For a single rectangular weight matrix $W^{(l)}_t$ at time $t$, the non-vectorized preconditioner of \textsc{VanillaMuon}($\eta, \mu$) is $D_{t}^{(l)} = U S^{-1} U^T$, where $M_{t}^{(l)} = U \Sigma V^T$ is the SVD of the corresponding momentum matrix and $S = \Sigma \Sigma'^{-1}$ for $O_{t}^{(l)} = U \Sigma' V^T$. being the SVD of the orthogonalized Momentum using Newton-Schulz.
\end{proposition}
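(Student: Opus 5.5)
We take the preceding proposition as the starting point: it gives $O_t^{(l)} = U\Sigma' V^T$, where $U,V$ are the singular vector matrices of $M_t^{(l)} = U\Sigma V^T$. The goal is a linear map of $M_t^{(l)}$ that produces $O_t^{(l)}$ by left multiplication. That is, we want $W_{t+1}^{(l)} = W_t^{(l)} - \eta D_t^{(l)} M_t^{(l)}$ with $D_t^{(l)} = U S^{-1} U^T$ and $S = \Sigma\Sigma'^{-1}$. Afterwards we vectorize via $\text{vec}(DM) = (I\otimes D)\,\text{vec}(M)$ to obtain the corresponding block of $P_t$. The proposition itself only concerns the non-vectorized form.

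The key computation is a direct substitution. We use the thin SVD, so $U$ has orthonormal columns and $U^TU = I$. Diagonal matrices commute, so $S^{-1} = \Sigma'\Sigma^{-1}$. Then
\begin{equation*}
D_t^{(l)} M_t^{(l)} = U S^{-1} U^T U \Sigma V^T = U \Sigma'\Sigma^{-1}\Sigma V^T = U\Sigma' V^T = O_t^{(l)}.
\end{equation*}
This gives the update $W_{t+1}^{(l)} = W_t^{(l)} - \eta D_t^{(l)} M_t^{(l)}$, so $D_t^{(l)}$ is the per-layer preconditioner. As a sanity check, when Newton--Schulz is exact we have $\Sigma' = I$. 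In that case $D_t^{(l)} = U\Sigma^{-1}U^T = (M_t^{(l)}M_t^{(l)T})^{-1/2}$ on the range of $U$, which matches the preconditioner of \cite{liu2025muonscalableLLM} quoted above.

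The main obstacle is making $S^{-1}$ well defined, which requires $\Sigma$ to be invertible. Zero singular values of $M_t^{(l)}$ break the formula. I would handle this in one of two ways. The first is to assume $M_t^{(l)}$ has full rank, which holds generically. The second is to restrict to the thin SVD over the nonzero singular values. This restriction costs nothing, because Newton--Schulz maps a zero singular value to $a\cdot 0 + b\cdot 0 + c\cdot 0 = 0$, so those directions contribute nothing to $O_t^{(l)}$ either. With this convention, $S^{-1}$ is a pseudo-inverse on the retained indices.

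Two further points need care. First, the thin SVD with $U\in\mathbb{R}^{m\times r}$ makes $D_t^{(l)}$ symmetric positive semidefinite, but only when every $\Sigma'_{ii}\ge 0$. Newton--Schulz iterates can in principle produce negative values, so $D_t^{(l)}$ is symmetric in general and positive semidefinite only under that sign condition. Second, the SVD is not unique when singular values repeat, but this does not affect the result: $D_t^{(l)}$ is a polynomial spectral function of $M_t^{(l)}M_t^{(l)T}$, so it is independent of the choice of $U$.
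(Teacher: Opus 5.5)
Your proof is correct and is the same argument as the paper's: substitute the SVD $M_t^{(l)}=U\Sigma V^T$ and use $U^TU=I$ to get $D_t^{(l)}M_t^{(l)} = US^{-1}\Sigma V^T = U\Sigma'V^T = O_t^{(l)}$. Your extra remarks (restricting to nonzero singular values, the sign condition for positive semidefiniteness, and independence from the choice of singular vectors) cover caveats the paper leaves implicit. One small addition: writing $S=\Sigma\Sigma'^{-1}$ also tacitly requires $\Sigma'$ to be invertible, and you can avoid that by defining $S^{-1}:=\Sigma'\Sigma^{-1}$ directly.
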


\begin{proof}
Let $t > 0$ and $1 \le l \le L$ be an arbitrary time and weight matrix index. Let $M_{t}^{(l)} = U \Sigma V^T$ be the singular value decomposition of the momentum matrix.

The update step of \textsc{VanillaMuon}($\eta, \mu$) is $W_{t+1}^{(l)} = W_t^{(l)} - \eta U\Sigma'V^T$, where $\eta$ is the learning rate. Let $S = \Sigma \Sigma'^{-1}$ and $D_{t}^{(l)} = U S^{-1} U^T$. We then have:

\begin{equation}
D_{t}^{(l)}M_{t}^{(l)} = U S^{-1} U^T U \Sigma V^T = U S^{-1}\Sigma V^T = U \Sigma' V^T
\end{equation}
\end{proof}

\begin{proposition}
Given the non-vectorized preconditioner $D_{t}^{(l)}$ of \textsc{VanillaMuon}($\eta, \mu$), the preconditioner of the vectorized block $w_t^{(l)} = \text{vec}(W_t^{(l)})$ is $P_{t}^{(l)} = I \otimes D_{t}^{(l)}$.
\end{proposition}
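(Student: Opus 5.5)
The plan is to reduce the statement to the standard Kronecker-vectorization identity $\text{vec}(AXB) = (B^T \otimes A)\,\text{vec}(X)$ for the column-stacking $\text{vec}$. The identity applies directly once the non-vectorized update has the form "matrix times momentum".

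First, I would start from the previous proposition. For each layer it gives the update as $W_{t+1}^{(l)} = W_t^{(l)} - \eta D_t^{(l)} M_t^{(l)}$, since $D_t^{(l)} M_t^{(l)} = U\Sigma'V^T = O_t^{(l)}$. Writing $M_t^{(l)} \in \mathbb{R}^{m\times n}$, so that $D_t^{(l)} \in \mathbb{R}^{m\times m}$, I would express the preconditioned momentum as the triple product $D_t^{(l)} M_t^{(l)} I_n$.

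Next, I would apply $\text{vec}$ to both sides, which is legitimate because $\text{vec}$ is linear. With $A = D_t^{(l)}$, $X = M_t^{(l)}$ and $B = I_n$, the identity gives
\[
\text{vec}\big(D_t^{(l)} M_t^{(l)}\big) = \big(I_n \otimes D_t^{(l)}\big)\,\text{vec}\big(M_t^{(l)}\big).
\]
Hence $w_{t+1}^{(l)} = w_t^{(l)} - \eta\,(I \otimes D_t^{(l)})\, m_t^{(l)}$, which matches the desired form $w_{t+1}=w_t-\eta P_t m_t$ on that block with $P_t^{(l)} = I\otimes D_t^{(l)}$.

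For self-containedness, I would briefly verify the identity column by column. The $j$-th column of $D_t^{(l)} M_t^{(l)}$ is $D_t^{(l)}$ applied to the $j$-th column of $M_t^{(l)}$. Stacking these columns is exactly multiplication of the stacked vector by the block-diagonal matrix $\text{diag}(D_t^{(l)},\dots,D_t^{(l)}) = I_n \otimes D_t^{(l)}$.

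I expect the main obstacle to be convention. The identity depends on $\text{vec}$ stacking columns; with row-major stacking, as in PyTorch flattening, the preconditioner would instead be $D_t^{(l)} \otimes I$. I would therefore state the column-stacking convention explicitly at the start.

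I would also remark on two well-definedness points. If $\Sigma'$ has zero entries, or $M_t^{(l)}$ is rank-deficient, $D_t^{(l)}$ should be read on the range of $U$, using pseudo-inverses. Since $D_t^{(l)}$ is symmetric positive semidefinite, $I\otimes D_t^{(l)}$ is too, which is consistent with taking $P_t^{1/2}$ in the main text.
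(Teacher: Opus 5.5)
Your proposal is correct and matches the paper's proof. The paper likewise substitutes $O_t^{(l)} = D_t^{(l)}M_t^{(l)}$ from the previous proposition, then uses linearity of $\text{vec}$ and $\text{vec}(D_t^{(l)}M_t^{(l)}) = (I\otimes D_t^{(l)})\,\text{vec}(M_t^{(l)})$; you additionally state the column-stacking convention, and the $D_t^{(l)}\otimes I$ form for row-major flattening, which the paper leaves implicit.
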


\begin{proof}
\begin{equation}
\text{vec}(W^{(l)}_{t+1}) = \text{vec}(W^{(l)}_t - \eta D_{t}^{(l)}M_{t}^{(l)}) = w_t^{(l)} - \eta \text{vec}(D_{t}^{(l)}M_{t}^{(l)}) = w_t^{(l)} - \eta(I \otimes D_{t}^{(l)})m_{t}^{(l)}
\end{equation}
\end{proof}

\begin{proposition}
Given the vectorized preconditioner $P_{t}^{(l)}$ of \textsc{VanillaMuon}, the preconditioner combined for all the weights $\mathcal{W}$ is $P_{t} = \bigoplus_{l=1}^L P_{t}^{(l)}$
\end{proposition}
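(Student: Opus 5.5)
The plan is to stack the per-layer vectorized updates from the previous proposition and read off a block-diagonal operator. By that proposition, for each layer $l$ the vectorized update is
\begin{equation}
w_{t+1}^{(l)} = w_t^{(l)} - \eta P_t^{(l)} m_t^{(l)}, \qquad P_t^{(l)} = I \otimes D_t^{(l)}.
\end{equation}
Muon updates each weight matrix independently, using only that layer's momentum $M_t^{(l)}$. So the full update of $w_t$ is the concatenation of these $L$ block updates, taken in the same order in which $w_t$ and $m_t$ are assembled.

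The key step is to check that the direct sum acts blockwise on a concatenated vector. Define $P_t = \bigoplus_{l=1}^L P_t^{(l)} = \operatorname{diag}(P_t^{(1)},\dots,P_t^{(L)})$. Its diagonal blocks have sizes $\dim w_t^{(l)} \times \dim w_t^{(l)}$, which match the partition of $m_t$. For any vector partitioned conformally as $m_t = [m_t^{(1)T},\dots,m_t^{(L)T}]^T$, ordinary block matrix multiplication gives
\begin{equation}
P_t m_t = [(P_t^{(1)}m_t^{(1)})^T,\dots,(P_t^{(L)}m_t^{(L)})^T]^T.
\end{equation}
Hence
\begin{equation}
w_t - \eta P_t m_t = [(w_t^{(1)} - \eta P_t^{(1)} m_t^{(1)})^T,\dots]^T = [w_{t+1}^{(1)T},\dots,w_{t+1}^{(L)T}]^T = w_{t+1},
\end{equation}
which is exactly the claimed form of the update.

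The only point needing care is bookkeeping: the block sizes and the ordering of $P_t$ must match the concatenation order of $w_t$ and $m_t$. The dimensions of $I$ in $I\otimes D_t^{(l)}$ must also match, namely $I$ is $n_l\times n_l$ for $W^{(l)}\in\mathbb{R^{m_l\times n_l}}$ under column-major $\mathrm{vec}$. Once the direct sum is defined over exactly this partition, the identity is immediate, so I do not expect a real obstacle.
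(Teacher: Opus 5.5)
Your proposal is correct and is essentially the paper's proof: you stack the per-layer updates $w_{t+1}^{(l)} = w_t^{(l)} - \eta P_t^{(l)} m_t^{(l)}$ and recognize the concatenation of the $P_t^{(l)} m_t^{(l)}$ as the block-diagonal matrix $\bigoplus_{l=1}^L P_t^{(l)}$ applied to $m_t$. Your bookkeeping remarks, on conformal partitioning and on $I$ being $n_l \times n_l$ in $I \otimes D_t^{(l)}$ under column-major vectorization, add precision that the paper leaves implicit.
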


\begin{proof}
\begin{equation}
\begin{aligned}
    w_{t+1} &= \begin{bmatrix}
    w_{t+1}^{(1)T} & \dots & w_{t+1}^{(L)T} 
    \end{bmatrix}^T = \begin{bmatrix}
    w_{t}^{(1)T} & \dots & w_{t}^{(L)T} 
    \end{bmatrix}^T - \eta \begin{bmatrix}
    (P_{t}^{(1)}m_{t}^{(1)})^T & \dots & (P_{t}^{(L)}m_{t}^{(L)})^T 
    \end{bmatrix}^T \\
    &= w_t - \eta\begin{bmatrix}
    P_{t}^{(1)} & \dots & 0 \\
    \vdots & \ddots & \vdots \\
    0 & \dots & P_{t}^{(L)}
    \end{bmatrix} m_{t} = w_t - \eta \left(\bigoplus_{l=1}^L P_{t}^{(l)}\right)m_{t}
\end{aligned}
\end{equation}
\end{proof}

It is also relevant to show that raising the preconditioner $P_{t}$ to a power $p$ can be done by simply raising the non-vectorized preconditioner matrices $D_{t}^{(l)}$ to $p$. This is important for computing the Effective Sharpness of the matrix $P_{t}^{1/2}H_{t}P_{t}^{1/2}$.

\begin{proposition}
Computing $P_{t}^p$ can be done by simply taking the power of each individual non-vectorized matrix $D_{t}^{(l)}$.
\end{proposition}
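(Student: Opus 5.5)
The plan is to show that $P_t^p = \bigoplus_{l=1}^L \left(I \otimes (D_t^{(l)})^p\right)$. This follows from two structural facts: matrix powers commute with direct sums, and they commute with Kronecker products against the identity. Everything reduces to the power of the small symmetric matrix $D_t^{(l)} = U S^{-1} U^T$.

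\textbf{Step 1 (direct sum).} By the combined-preconditioner proposition, $P_t = \bigoplus_{l=1}^L P_t^{(l)}$ is block diagonal. Block-diagonal matrices multiply blockwise, so $P_t^k = \bigoplus_l (P_t^{(l)})^k$ for every integer $k \ge 0$.

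\textbf{Step 2 (Kronecker).} By the vectorization proposition, $P_t^{(l)} = I \otimes D_t^{(l)}$. The mixed-product property $(A\otimes B)(C\otimes D) = AC \otimes BD$ then gives $(I\otimes D_t^{(l)})^k = I \otimes (D_t^{(l)})^k$, by induction on $k$. Together with Step 1, this settles all nonnegative integer powers.

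\textbf{Step 3 (general $p$, the main obstacle).} The case we actually need is $p = 1/2$, and possibly negative or real $p$. Here repeated multiplication does not apply, so I would use a spectral argument. Since $D_t^{(l)} = U S^{-1} U^T$ with $U$ orthogonal (at least on the column space), $D_t^{(l)}$ is symmetric with eigenvalues $\Sigma'\Sigma^{-1}$. Define $(D_t^{(l)})^p := U S^{-p} U^T$. Then
$$I\otimes D_t^{(l)} = (I\otimes U)(I\otimes S^{-1})(I\otimes U)^T,$$
and $I\otimes U$ is orthogonal. The direct sum of these factorizations is therefore an eigendecomposition of $P_t$. Applying $x\mapsto x^p$ to the diagonal factor then yields $\bigoplus_l I\otimes U S^{-p}U^T$.

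Two points need care in Step 3. First, $p=1/2$ requires nonnegative eigenvalues. I would note that $\Sigma'$ consists of the Newton--Schulz polynomial images of positive singular values, which are positive in practice, and state this as an assumption. Second, zero singular values arise when $U$ is rectangular or $M_t^{(l)}$ is rank-deficient. I would handle these by completing $U$ to a full orthogonal matrix and defining the power on the nonzero part, with zero on the orthogonal complement, as the paper implicitly does.

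Finally, I would remark on uniqueness: the principal $p$-th power of a positive semidefinite matrix is unique. The blockwise construction therefore coincides with $P_t^p$, and $P_t^{1/2}H_tP_t^{1/2}$ can be computed layerwise.
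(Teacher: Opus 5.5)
Your proposal is correct and follows the paper's chain, $P_t^p=\bigoplus_l (P_t^{(l)})^p=\bigoplus_l (I\otimes D_t^{(l)})^p=\bigoplus_l I\otimes (D_t^{(l)})^p$, which the paper simply asserts in one line. Your Step 3 spectral argument, using the orthogonal factor $\bigoplus_l I\otimes U$ and the positivity of $\Sigma'\Sigma^{-1}$, adds justification for the non-integer case $p=1/2$ that is actually needed for $P_t^{1/2}H_tP_t^{1/2}$; the paper leaves that case implicit.
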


\begin{proof}
\begin{equation}
P_{t}^p = \left(\bigoplus_{l=1}^L P_{t}^{(l)}\right)^p = \bigoplus_{l=1}^L P_{t}^{(l)p} = \bigoplus_{l=1}^L (I \otimes D_{t}^{(l)})^p = \bigoplus_{l=1}^L (I \otimes D_{t}^{(l)p})
\end{equation}
\end{proof}

\section{Flat Minima Hypothesis}

\begin{table*}[b]
\caption{For each Learning Rate scheduler (LR) and Optimizer pairing, we report Validation Accuracy (Val Acc), Generalization Gap (Gap), Raw Sharpness (Raw), and Adaptive Sharpness (ASAM) scores. Each entry aggregates the mean and standard deviation across 64 runs.}
\label{tbl:benchmark}
\vskip 0.15in
\begin{center}
\begin{small}
\begin{sc}
\begin{tabular}{llccccc}
\toprule
LR & Optimizer & Train Acc & Val Acc & Acc Gap & Raw ($\times 10^{2}$) & ASAM ($\times 10^{-2}$) \\
\midrule
Fixed & Normalized Muon & 0.97 $\pm$ 0.005 & 0.90 $\pm$ 0.007 & 0.07 $\pm$ 0.007 & 0.69 $\pm$ 0.075 & 0.82 $\pm$ 0.229 \\
Fixed & Decoupled Muon & 0.98 $\pm$ 0.004 & 0.90 $\pm$ 0.006 & 0.08 $\pm$ 0.007 & 0.58 $\pm$ 0.080 & 0.96 $\pm$ 0.307 \\
Fixed & Coupled SGD & 0.87 $\pm$ 0.012 & 0.80 $\pm$ 0.057 & 0.07 $\pm$ 0.051 & 0.62 $\pm$ 0.135 & 2.06 $\pm$ 0.911 \\
Fixed & Coupled Adam & 0.91 $\pm$ 0.007 & 0.87 $\pm$ 0.012 & 0.04 $\pm$ 0.014 & 1.34 $\pm$ 0.266 & 1.93 $\pm$ 0.626 \\
\midrule
LDS & Normalized Muon & 1.00 $\pm$ 0.001 & 0.94 $\pm$ 0.002 & 0.06 $\pm$ 0.002 & 0.45 $\pm$ 0.078 & 0.71 $\pm$ 0.097 \\
LDS & Decoupled Muon & 1.00 $\pm$ 0.001 & 0.93 $\pm$ 0.002 & 0.07 $\pm$ 0.002 & 0.13 $\pm$ 0.076 & 0.70 $\pm$ 0.283 \\
LDS & Coupled SGD & 0.98 $\pm$ 0.003 & 0.92 $\pm$ 0.002 & 0.06 $\pm$ 0.003 & 0.03 $\pm$ 0.005 & 0.32 $\pm$ 0.072 \\
LDS & Coupled Adam & 0.99 $\pm$ 0.002 & 0.92 $\pm$ 0.002 & 0.07 $\pm$ 0.003 & 1.05 $\pm$ 0.112 & 0.56 $\pm$ 0.074 \\
\bottomrule
\end{tabular}
\end{sc}
\end{small}
\end{center}
\vskip -0.1in
\end{table*}

\begin{figure*}[ht]
\vskip 0.2in
\begin{center}
\begin{subfigure}
    \centering
    \includegraphics[width=0.98\textwidth]{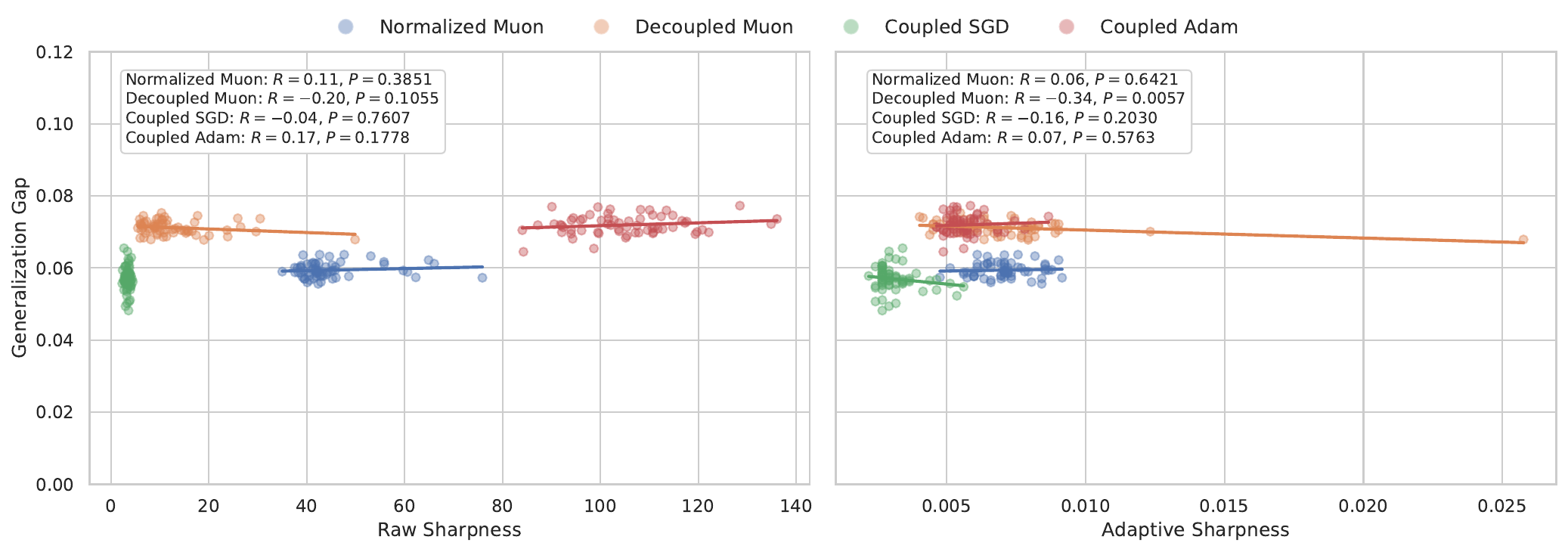}
    \label{fig:scheduled_sharp_gap}
\end{subfigure}
\caption{Per-optimizer correlation of raw/adaptive sharpness with generalization gap using a LDS.}
\end{center}
\vskip -0.2in
\end{figure*}

\section{Edge of Stability Graphs}

In this section, we provide the plots for the AEoS of \textsc{VanillaAdam} and \textsc{RMSprop}.

\begin{figure}[ht]
\vskip 0.2in
\begin{center}
\begin{subfigure}
    \centering
    \includegraphics[width=0.395\columnwidth]{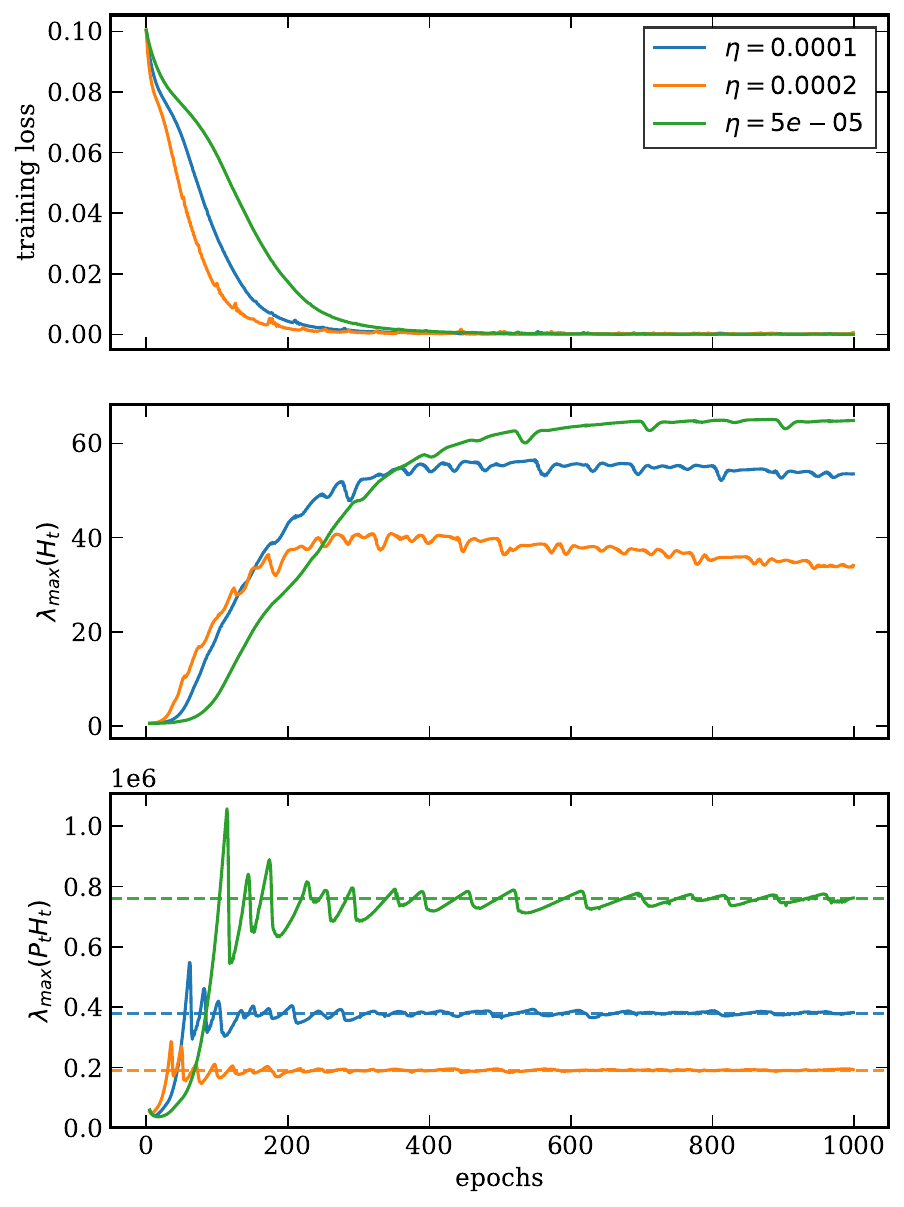}
    \label{fig:adam_eos}
\end{subfigure}
\begin{subfigure}
    \centering
    \includegraphics[width=0.4\columnwidth]{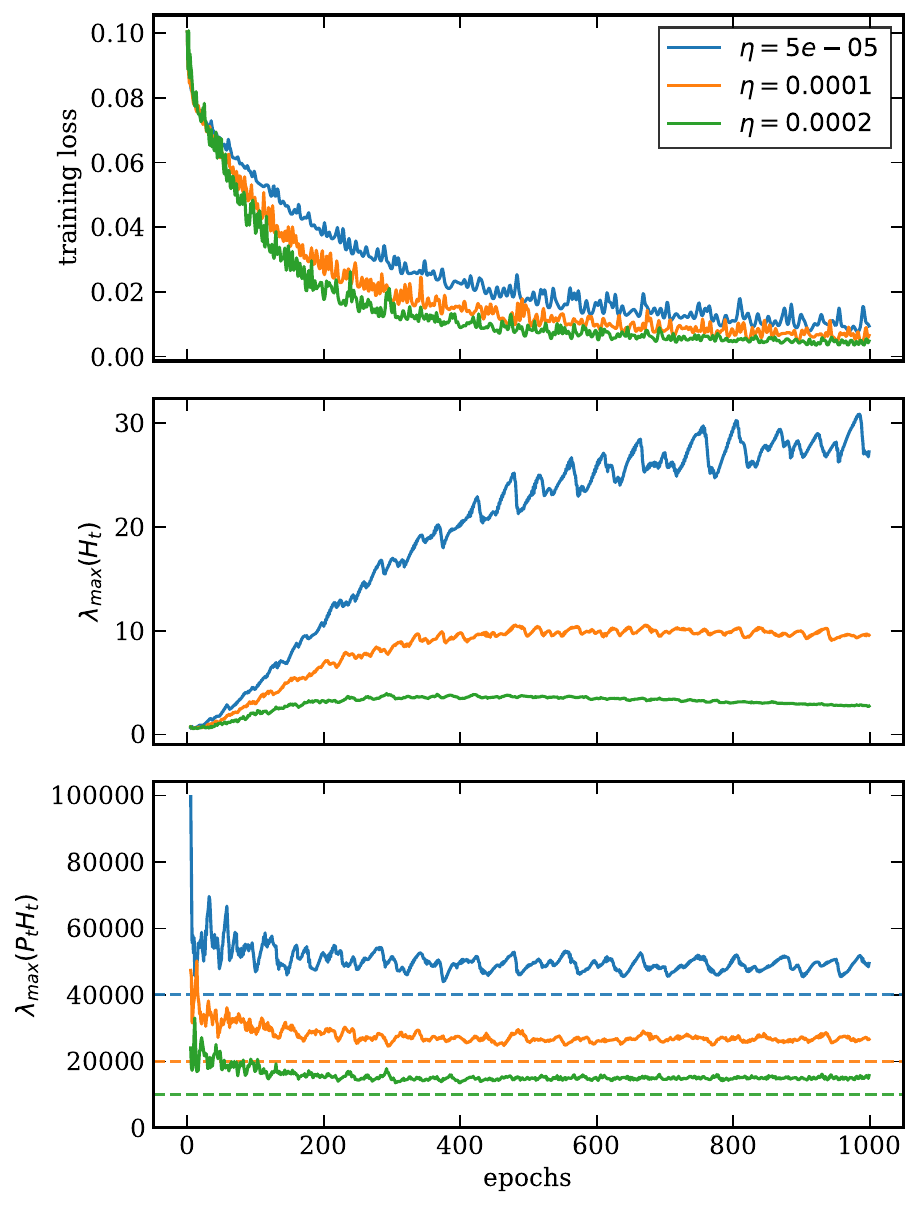}
    \label{fig:rmsprop_eos}
\end{subfigure}
\caption{The training loss (top), raw sharpness (middle) and Effective Sharpness (bottom) of \textsc{VanillaAdam}($\eta, \beta_1, \beta_2$) with $\beta_1 = 0.9, \beta_2 = 0.999$ and three different $\eta$ (left) and \textsc{RMSprop}($\eta, \alpha$) (right) with $\alpha = 0.99$ and three different $\eta$. The dashed lines in the bottom plots represent the threshold where it is expected for the algorithms to oscillate, computed as $2/\eta$ for \textsc{RMSprop}($\eta, \alpha$) and $(2 + 2\beta_1)/\eta$ for \textsc{VanillaAdam}($\eta, \beta_1, \beta_2$) \cite{goh2017momentum}. \textsc{VanillaAdam} oscillates around the threshold and \textsc{RMSprop} slightly above the threshold, exactly as observed in previous work \cite{cohen2024adaptivegradientmethodsedge}}
\end{center}
\end{figure}

\section{Sharpness Computation Details}

To compute the raw and Effective Sharpness in our experiments we utilize a power iteration approach inspired by PyHessian \cite{yao2020pyhessian}. In addition to their implementation, we add support for a preconditioner to compute the effective Hessian sharpness. The algorithm can be found in \ref{alg:sharpness}

\begin{algorithm}[ht]
   \caption{Sharpness Power Iteration}
   \label{alg:sharpness}
\begin{algorithmic}
   \STATE {\bfseries Input:} preconditioner $P_t$, Hessian $H_t$, maximum iterations $n_\text{max}$, tolerance $\tau$
   \STATE Initialize $v_0 \sim \mathcal{N}(0, I_p)$.
   \STATE Normalize $v_1 = v_0 / \Vert v_0 \Vert_2$
   \STATE Initialize $\lambda_\text{max} = \text{none}$
   \FOR{$i=1$ {\bfseries to} $n_\text{max}$}
   \STATE $Mv_i = P_t^{1/2}H_tP_t^{1/2}v_i$
   \STATE $\lambda_\text{temp} = v_i^TMv_i$
   \STATE $v_{i+1} = Mv_i / \Vert Mv_i \Vert_2$
   \IF{$\lambda_\text{max} = \text{none}$}
   \STATE $\lambda_\text{max} = \lambda_\text{temp}$
   \ELSIF{$|\lambda_\text{max} - \lambda_\text{temp}| / |\lambda_\text{max} + 10^{-7}| < \tau$}
   \STATE End for loop
   \ENDIF
   \STATE $\lambda_\text{max} = \lambda_\text{temp}$
   \ENDFOR
   \STATE \textbf{return} $\lambda_\text{max}$
\end{algorithmic}
\end{algorithm}

Note that in Algorithm \ref{alg:sharpness}, it is not relevant to store the entire Hessian $H_t$ and preconditioner $P_t$, as $H_tv$ can be computed efficiently via the Pearlmutter trick and $P^{1/2}_tv$ can be computed very efficiently if it is diagonal and quite efficiently if it is block diagonal. To compute the raw sharpness, all that has to be done is to pass $P_t = I$. In our computations, we used a tolerance of $\tau = 10^{-9}$ and a maximum number of iterations $n_\text{max} = 200$.

\section{Optimizer Definitions} \label{app:optims}

\subsection{Muon Optimizer}
The Muon optimizer \cite{jordan2024muon} is defined as follows: At time $t$, for a weight matrix $W_t^{(l)}$ with index $l$ of a neural network with weights $w_t \in \mathbb{R}^p$ and loss function $\mathcal{L}: \mathbb{R}^p \rightarrow \mathbb{R}$, the update is computed as:

\begin{equation}
\begin{aligned}
    M_{t}^{(l)} &= \mu M_{t-1}^{(l)} + \nabla_{W_t^{(l)}} \mathcal{L}(w_t) \\
    O_{t}^{(l)} &= \text{Newton-Schulz5}(M_{t}^{(l)}; a, b, c) \\
    W_{t+1}^{(l)} &= W_t^{(l)} - \eta_t O_{t}^{(l)}
\end{aligned}
\end{equation}
We refer to this rule as \textsc{VanillaMuon}($\eta_t, \mu$). In \citet{liu2025muonscalableLLM}, the authors extend Muon by adding decoupled weight decay. The update rule then becomes $W_{t+1}^{(l)} = W_t^{(l)} - \eta_t(O_t^{(l)} + \lambda W_t^{(l)})$, where $\lambda$ is the weight decay constant. We refer to this rule as \textsc{DecoupledMuon}($\eta_t, \mu, \lambda$) since it uses decoupled weight decay. The final update used in Airbench \citet{jordan2024airbench} involves performing a normalization step before the update. I.e., before the new weights at $t+1$ are computed, the weights are normalized as $W_t^{(l)} = \sqrt{m_l}W_t^{(l)} / \Vert W_t^{(l)}\Vert_F$, where $W_t^{(l)} \in \mathbb{R}^{m_l \times n_l}$. We refer to this update rule as \textsc{NormalizedMuon}($\eta_t, \mu$).

\subsection{Other Optimizers}

We compare Muon to several other optimizers, including a non-adaptive optimizer and a diagonally preconditioned optimizer. We define Gradient Descent with weight decay as \textsc{CoupledGD}($\eta_t, \lambda$) with coupled weight decay.

The second algorithm we use is Adam, which we refer to as \textsc{VanillaAdam}($\eta_t, \beta_1, \beta_2$) with learning rate $\eta_t$ at time $t$ and the momentum ($\beta_1$) and squared exponential average ($\beta_2$) coefficients. As with Muon, we also use a coupled weight decay version of Adam in our experiments, named \textsc{CoupledAdam}($\eta_t, \beta_1, \beta_2, \lambda$) with the weight decay constant $\lambda$.

Finally, we use RMSprop, named \textsc{RMSprop}($\eta_t, \alpha$) with learning rate $\eta_t$ at time $t$ and squared gradient exponential average coefficient $\alpha$. Since this algorithm is only used for AEoS analysis, we do not use a weight decay alternative.

\end{document}